\newtheorem{theorem}{Theorem}
\newtheorem{lemma}{Lemma}
\newtheorem{lemma-ap}{Lemma}
\newcommand{\parens}[1]{\left(#1\right)}
\newcommand{\expectsub}[2]{\mathrm{E}_{#1}\left[{#2}\right]}
\newcommand{\kl}{{\cal D}}
\newcommand{\ignore}[1]{}
\DeclareMathOperator*{\argmax}{argmax}
\DeclareMathOperator*{\argmin}{argmin}
\newcommand{\lmax}{L_{\max}}
\begin{document}
\title{A PAC-Bayesian Tutorial with A Dropout Bound}
\author{David McAllester}
\date{July 5, 2013}

\maketitle

\abstract{This tutorial gives a concise overview of existing PAC-Bayesian theory focusing on three generalization bounds.
The first is an Occam bound which handles rules with finite precision parameters and which states that generalization loss
is near training loss when the number of bits needed to write the rule is small compared to the sample size.
The second is a PAC-Bayesian bound providing a generalization guarantee for posterior distributions rather than for individual rules.
The PAC-Bayesian bound naturally handles infinite precision rule parameters, $L_2$
regularization, {\em provides a bound for dropout training}, and defines a natural notion of a single distinguished PAC-Bayesian posterior distribution.
The third bound is a training-variance bound ---
a kind of bias-variance analysis but with bias replaced by expected training loss.  The training-variance bound
dominates the other bounds but is more difficult to interpret.  It seems to suggest variance reduction methods such as bagging and may ultimately provide
a more meaningful analysis of dropouts.}

\section{Introduction}

PAC-Bayesian theory blends Bayesian and frequentist approaches to the theory of machine learning.
PAC-Bayesians theory assumes a probability distribution on ``situations'' occurring in nature and a
prior weighting on ``rules'' expressing a learners preference for some rules over others.  There is no assumed
relationship between the learner's bias on rules and nature's distribution on situations.  This is different from
Bayesian inference where the starting point is a (perhaps subjective) {\em joint} distribution on rules and situations
inducing a conditional distribution on rules given situations.  The acronym PAC stands for Probably Approximately Correct
and is borrowed from Valiant's notion of PAC learnability \cite{PAC}.
PAC-Bayesian generalization bounds \cite{McAllester99,seeger2003,maurer2004,langford2006,catoni2007pac,germain2009pac}
govern the performance (loss) when stochastically selecting rules from a ``posterior'' distribution.  The performance guarantee
involves the learner's bias and an (unrelated) sample of situations.

This tutorial provides a concise overview
of existing PAC-Bayesian theory focusing on three bounds.  The first is an Occam bound.  An Occam bound assumes a discrete (countable) set of rules and bounds the loss of an individual rule.
The Occam bound immediately yields guarantees for rules with sparse finite precision parameters.
The second is a PAC-Bayesian bound governing the loss of a stochastic process which draws rules from a PAC-Bayesian rule posterior.
The PAC-Bayesian bound easily handles $L_2$ regularization
of infinite-precision parameters producing bounds closely related to support vector machines.
It also provides bounds for a form of dropout learning \cite{deng2013new}.

The third bound is a training-variance bound similar to a bias-variance analysis but with bias replaced by expected training loss.
This bound assumes a given learning algorithm
and provides an upper bound
on the expected generalization loss in terms of the expected training loss and a measure of the variance of the output of the learning algorithm.
While the training-variance bound is clearly tighter than the PAC-Bayesian bound,
the training-variance bound is difficult to interpret.  The training-variance bounds seems to suggest variance-reduction methods such as
bagging~\cite{breiman1996bagging}.

Unbounded loss functions, such as square loss or log loss, can lead to unstable learning algorithms.
Learning algorithms that minimize training loss for an unbounded loss function
tend to be overly sensitive to outliers --- training points of very high loss.
Learning algorithms based on bounded loss functions tend to be more robust (stable).
An unbounded loss function can be converted to a bounded loss by selecting an ``outlier threshold'' $\lmax$
and replacing the unbounded loss $L$ by $\min(L,\lmax)$.

PAC-Bayesian bounds apply only to bounded loss functions.  We assume the loss is bounded to the interval $[0,\lmax]$.
It is of course possible to rescale any bounded loss function into the interval $[0,1]$.  However,
this obscures the significance of the choice of  $\lmax$ in the design of a robust versions of
square loss or log loss.  For this reason we leave $\lmax$ explicit in the statement of the bounds.

This tutorial also discusses two improvements or clarifications of the three bounds mentioned above.
The first applies the training-variance bound to the learning algorithm defined by the PAC-Bayesian posterior.
Unfortunately the results suffer from looseness in the analysis and remain difficult to interpret.
The second tightens the Occam bound by incorporating the loss variance into the bound.  We show that the improvements
achievable in this way a fundamentally limited.

\section{An Occam Bound}
\label{sec:Occam}

Let ${\cal H}$ be a set of ``rules'', ${\cal S}$ be a set of ``situations'', $\lmax > 0$ be a real number,
and $L$ be a loss function such that for a rule $h \in {\cal H}$ and a situation $s \in {\cal S}$ we have that $L(h,s) \in [0,\lmax]$.
We let $D$ be a probability distribution (measure) on ${\cal S}$ and let $P$ be a distribution (measure) on ${\cal H}$.
We think of $D$ as a distribution on situations occurring in nature and $P$ as learner bias on rules.
There is no assumed relationship between $D$ and $P$.\footnote{We should assume that the loss function $L$ is
measurable with respect to $D$ and $P$. Here we will avoid this level of rigor.}
We are interested in drawing a sequence $S$ of $N$ situations IID from $D$
($S \sim D^N$) and then selecting $h$ based on $S$ so as to minimize the ``generalization loss'' $L(h) = \expectsub{s \sim D}{L(h,s)}$.
When the sample $S$ is clear from context we will write $\hat{L}(h)$ for $\frac{1}{N} \sum_{s \in S} L(h,s)$.

For Occam bounds we consider the case where ${\cal H}$
is discrete (countable).  An Occam bound states that with probability at least $1-\delta$ over the draw of the sample $S \sim D^N$
we have $L(h) \leq B(P(h),S,\delta)$ simultaneously for all $h$ where $B(P(h),S,\delta)$ is different in different bounds.
While various Occam bounds have appeared in the literature, here we will consider only the following.

\begin{theorem} With probability at least $1-\delta$ over the draw of $S \sim D^N$ we have that the following holds simultaneously for all $h$.

\begin{equation}
\label{Occam}
\mathbf{L}(h) \leq \inf_{\lambda > \frac{1}{2}}\;\frac{1}{1-\frac{1}{2\lambda}}\parens{\widehat{L}(h) + \frac{\lambda\lmax}{N}\parens{\ln\frac{1}{P(h)} + \ln\frac{1}{\delta}}}
\end{equation}
\end{theorem}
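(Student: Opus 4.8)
The plan is to reduce the statement to a moment-generating-function (MGF) bound for a single rule and then lift it to all rules by a prior-weighted union bound. Fix a rule $h$ and abbreviate $\mu=\mathbf{L}(h)$ and $\widehat\mu=\widehat{L}(h)$, so that $\widehat\mu$ is the average of the $N$ i.i.d.\ samples $L(h,s_i)\in[0,\lmax]$. For any $t\ge 0$ the convexity of $x\mapsto e^{-tx}$ gives the chord bound $e^{-tL(h,s)}\le 1-\frac{L(h,s)}{\lmax}\left(1-e^{-t\lmax}\right)$; taking $\mathbb{E}_{s\sim D}$ and using $1-x\le e^{-x}$ yields $\mathbb{E}_{s}\!\left[e^{-tL(h,s)}\right]\le \exp\!\left(-\frac{\mu}{\lmax}\left(1-e^{-t\lmax}\right)\right)$. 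Raising to the $N$-th power (independence) and writing $\sum_i L(h,s_i)=N\widehat\mu$ gives $\mathbb{E}_{S}\!\left[e^{-tN\widehat\mu}\right]\le \exp\!\left(-\frac{N\mu}{\lmax}\left(1-e^{-t\lmax}\right)\right)$.

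Next I would linearize the exponent with the elementary inequality $1-e^{-u}\ge u-\tfrac12 u^2$ (valid for $u\ge 0$, applied with $u=t\lmax$), which turns the last estimate into the clean MGF statement $\mathbb{E}_{S}\!\left[\exp\!\left(tN(\mu-\widehat\mu)-\tfrac12 N\mu t^2\lmax\right)\right]\le 1$ for every fixed $t\ge 0$. Let $Z_t(h)$ denote the random variable inside this expectation. By Fubini its prior-weighted average also satisfies $\mathbb{E}_{S}\,\mathbb{E}_{h\sim P}[Z_t(h)]\le 1$, so Markov's inequality gives, with probability at least $1-\delta$, the event $\mathbb{E}_{h\sim P}[Z_t(h)]\le 1/\delta$. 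Since every $Z_t(h)\ge 0$, on this event $P(h)Z_t(h)\le 1/\delta$ and hence $\ln Z_t(h)\le \ln\frac1{P(h)}+\ln\frac1\delta$ simultaneously for all $h$; this single Markov step is exactly what produces the $\ln\frac1{P(h)}$ union term for free.

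It then remains to unwind this inequality. Writing it out gives $tN(\mu-\widehat\mu)-\tfrac12 N\mu t^2\lmax\le \ln\frac1{P(h)}+\ln\frac1\delta$, and the substitution $t=\frac{1}{\lambda\lmax}$ (which needs $\lambda>\tfrac12$ so that the resulting coefficient keeps $1-\frac1{2\lambda}>0$) rearranges to $\left(1-\frac1{2\lambda}\right)\mathbf{L}(h)\le \widehat{L}(h)+\frac{\lambda\lmax}{N}\left(\ln\frac1{P(h)}+\ln\frac1\delta\right)$; dividing by the positive factor $1-\frac1{2\lambda}$ reproduces~\eqref{Occam}. I expect the algebra here to be routine, since the substitution was chosen precisely to match the constants.

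The main obstacle is the infimum over $\lambda$. Because $\mathbf{L}(h)\le\inf_\lambda f(h,\lambda)$ is equivalent to $\mathbf{L}(h)\le f(h,\lambda)$ holding for \emph{every} $\lambda$, and because the high-probability event above is $\{\mathbb{E}_{h\sim P}[Z_t(h)]\le 1/\delta\}$, which depends on $t$ and hence on $\lambda$, the argument as written only delivers the bound for a single $\lambda$ fixed in advance. To obtain the displayed $\inf_{\lambda}$ with a data-dependent optimizer I would optimize $t$ inside the tail estimate rather than inside an expectation: the same chord bound yields the relative-entropy Chernoff inequality $\Pr_S[\widehat\mu\le q]\le \exp\!\left(-N\,\mathrm{kl}(q/\lmax\,\|\,\mu/\lmax)\right)$, whose exponent is already $t$-free. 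Choosing the per-$h$ threshold so that the rate equals $\ln\frac1{P(h)}+\ln\frac1\delta$ and union-bounding over $h$ produces a single $\lambda$-free event on which a quadratic lower bound for $\mathrm{kl}$ followed by the arithmetic-mean/geometric-mean inequality recovers the $\lambda$-indexed inequality for all $\lambda>\tfrac12$ simultaneously. Verifying that this $\lambda$-free event implies the exact factor $\frac{1}{1-1/(2\lambda)}$, rather than a slightly weaker constant, is the one point I expect to require care.
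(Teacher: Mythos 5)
Your proposal is correct, but only because of the repair you make in the last paragraph --- and that repaired argument is precisely the paper's own proof. The paper proceeds exactly as you describe there: it invokes the relative Chernoff bound $\Pr_{S}\bigl(\hat{L}(h) \leq L(h) - \epsilon(h)\bigr) \leq e^{-N\epsilon(h)^2/(2L(h))}$ with the per-rule deviation $\epsilon(h)$ chosen so that this probability equals $\delta P(h)$, takes a union bound over $h$, and then on the resulting $\lambda$-free event applies the deterministic identity $\sqrt{ab} = \inf_{\lambda > 0}\bigl(\frac{a}{2\lambda} + \frac{\lambda b}{2}\bigr)$ with $a = L(h)$, which after solving $(1-\frac{1}{2\lambda})L(h) \leq \hat{L}(h) + \frac{\lambda\lmax}{N}(\ln\frac{1}{P(h)}+\ln\frac{1}{\delta})$ for $L(h)$ gives the exact factor $\frac{1}{1-\frac{1}{2\lambda}}$ --- so the one point you flagged as needing care does work out with no loss of constants, and uniformity in $\lambda$ is free because $\lambda$ enters only through a pointwise inequality valid on a single event. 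Your diagnosis of why the moment-generating-function route in the body of your proposal falls short is also exactly right: the Markov step is applied to a fixed-$t$ random variable, so the high-probability event depends on $\lambda$, and that route can only deliver the bound for one $\lambda$ fixed in advance (which is indeed how the paper states its PAC-Bayesian Theorem~\ref{thm:PAC-Bayes}, where the same limitation appears explicitly). It is worth noting that your MGF argument --- chord bound on $e^{-tx}$, Fubini over the prior, Markov, and the observation $P(h)Z_t(h) \leq \mathbb{E}_{h'\sim P}[Z_t(h')]$ --- is essentially the paper's Appendix~\ref{sec:PAC-Bayes-Proof} proof of Theorem~\ref{thm:PAC-Bayes} specialized to point-mass posteriors (your $Z_t$ is their $e^{N\kl_\gamma(\hat{L}(h),L(h))}$ after a quadratic relaxation): that technique buys generalization to arbitrary posteriors $Q$ via the shift-of-measure lemma, but at the cost of the internal infimum over $\lambda$, which is exactly the trade-off between the paper's two theorems.
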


\begin{proof} We consider the case of $\lmax = 1$, the case for general $\lmax$ follows by rescaling the loss function.
Define $\epsilon(h)$ by
$$\epsilon(h) = \sqrt{\frac{2L(h)\parens{\ln\frac{1}{P(h)} + \ln\frac{1}{\delta}}}{N}}.$$
For a given $h \in {\cal H}$ the relative Chernoff bound \cite{angluin1977fast}
states that
$$P_{S \sim D^N}\parens{\hat{L}(h) \leq L(h) - \epsilon(h)} \leq e^{-N\frac{\epsilon(h)^2}{2L(h)}} = \delta P(h).$$
Hence, for a fixed $h$ the probability of
$L(h) > \hat{L}(h) + \epsilon(h)$ is at most $P(h)\delta$.
By the union bound the probability that there exists an $h$
with $L(h) > \hat{L}(h) + \epsilon(h)$ is at most the sum over $h$ of $P(h)\delta$ which equals $\delta$.
Hence we get that with probability at least $1-\delta$ over the draw of the sample the following holds simultaneously for all $h$.
$$L(h) \leq \widehat{L}(h) + \sqrt{L(h)\parens{\frac{2\parens{ln\frac{1}{P(h)} + \ln\frac{1}{\delta}}}{N}}}$$
Using
$$\sqrt{ab} = \inf_{\lambda > 0}\;\frac{a}{2\lambda} + \frac{\lambda b}{2}$$
we get
$$L(h) \leq \widehat{L}(h) + \frac{L(h)}{2\lambda} + \frac{\lambda\parens{ln\frac{1}{P(h)} + \ln\frac{1}{\delta}}}{N}$$
Solving for $L(h)$ yields the result.
\end{proof}

An important observation for (\ref{Occam}) is that there is no point in taking $\lambda$ to be large.  Restricting $\lambda$ to be less than $\lambda_{\max}$
increases the bound by a factor of at most $1/(1-1/2\lambda_{\max})$.  For example, restricting $\lambda$ to be less than 10
increases the bound by a factor of at most $20/19$.  So for practical purposes we can assume that $\lambda$ is no larger than 10.

\subsection{Finite Precision Bounds}
\label{sec:floating}

As an example application of (\ref{Occam}) we can consider rules of the form $h_\Theta$ for some parameter vector $\Theta \in \mathbb{R}^d$
where each component of the vector $\Theta$ is represented with a $b$-bit finite precision representation.  In this case
we can take the prior $P$ to be uniform on the $2^{bd}$ possible rules and (\ref{Occam}) then gives that
with probability at least $1-\delta$ over the draw of the sample we have that the following holds
simultaneously for all such $\Theta$.
$$L(h_\Theta) \leq \inf_{\lambda > \frac{1}{2}} \; \frac{1}{1-\frac{1}{2\lambda}}\parens{\hat{L}(h_\Theta) + \frac{\lambda\lmax}{N}\parens{(\ln 2)bd + \ln \frac{1}{\delta}}}$$

We can also consider sparse representations.  For $\Theta \in \mathbb{R}^d$ we say that $\Theta$ has sparsity level $s$
if at most $s$ components of $\Theta$ are non-zero.  We can then represent a sparse vector by first specifying the sparsity $s$ and then listing $s$ pairs
each of which specifies a non-zero component and its value.  Intuitively
we can write a rule by first using $\log_2 d$ bits to specify $s$ plus $(\log_2 d)b$
bits for each pair of a component index and $b$-bit parameter value representation.  The probability of a rule $h$ can always be taken to be $2^{-|h|}$ where $h$ is the number of bits
needed to name $h$.  Formally we avoid coding and instead defining a probability distribution where we first select $s$ uniformly from $1$ to $d$
and then select $s$ pairs with indices drawn uniformly form $1$ to $d$ and a parameter representation drawn uniformly from all $2^b$ bit strings.  In this case
we get that with probability at least $1-\delta$ over the draw of the sample of situations we have the following holds simultaneously for all sparsity levels $s$ and
$\Theta$ with sparsity $s$ and with $b$-bit representations for the non-zero components of $\Theta$.
$$L(h_\Theta) \leq \inf_{\lambda > \frac{1}{2}} \; \frac{1}{1-\frac{1}{2\lambda}}\parens{\hat{L}(h_\Theta) + \frac{\lambda\lmax}{N}\parens{\ln d + s(\ln d + (\ln 2)b) + \ln \frac{1}{\delta}}}$$
More sophisticated codings of classifiers are possible.  For example, variable precision codes for real numbers can be useful when the error rate is insensitive to the precision.
However, the PAC-Bayesian theorem stated in section~\ref{sec:PAC-Bayes} handles infinite precision parameters and seems generally preferable.  The Occam bound
is included here primarily because of its conceptual simplicity and the intuitive value of its proof.

\section{A PAC-Bayesian Bound}

\label{sec:PAC-Bayes}
Let ${\cal H}$, ${\cal S}$, $\lmax$, $L$, $D$ and $P$ be defined as in section~\ref{sec:Occam}.  We now allow the rule set ${\cal H}$
to be continuous (uncountable).  Let $Q$ be a variable ranging over distributions (measures) on the rule space ${\cal H}$.
For $s \in {\cal S}$ we define the loss $L(Q,s)$ to be $\expectsub{h \sim Q}{L(h,s)}$. We have that $L(Q,s)$ is the loss of
a stochastic process that selects the hypothesis $h$ according to distribution $Q$.
We define $L(Q)$ to be $\expectsub{s \sim D}{L(Q,s)}$.  Given a sample $S = \{s_1,\ldots,s_N\}$
we define $\hat{L}(Q)$ to be $\frac{1}{N} \sum_{s\in S} {L(Q,s)}$.  Finally we will write $\kl(Q,P)$ for the Kullback-Leibler divergence from $Q$ to $P$.
$$\kl(Q,P) = \expectsub{h \sim Q}{\ln\frac{Q(h)}{P(h)}}$$

A PAC-Bayesian theorem uniformly bounds $L(Q)$ in terms of $\hat{L}(Q)$ and $\kl(Q,P)$.
The first PAC-Bayesian theorem was given in \cite{McAllester99}.  Tighter PAC-Bayesian theorems have been given by various authors
\cite{seeger2003,maurer2004,langford2006,catoni2007pac,germain2009pac}.
Here we will focus on the following PAC-Bayesian version of the Occam bound (\ref{Occam}) which can be derived as a corollary of statements by Catoni
\cite{catoni2007pac}.  A proof is included here in appendix~\ref{sec:PAC-Bayes-Proof}. 

\begin{theorem}
\label{thm:PAC-Bayes}
For $\lambda > \frac{1}{2}$ selected before the draw of the sample (for any fixed $\lambda > 1/2$) we have that, with probability
at least $1-\delta$ over the draw of the sample, the following holds simultaneously for all distributions $Q$ on ${\cal H}$.
\begin{equation}
\label{PAC-Bayes}
L(Q) \leq \frac{1}{1-\frac{1}{2\lambda}}\parens{\hat{L}(Q) + \frac{\lambda \lmax}{N}\parens{\kl(Q,P) + \ln \frac{1}{\delta}}}
\end{equation}
\end{theorem}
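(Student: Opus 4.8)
The plan is to follow the standard PAC-Bayesian recipe built around the change-of-measure (Donsker--Varadhan) inequality, which states that for any measurable $\phi : {\cal H} \to \mathbb{R}$ and any distribution $Q$,
$$\expectsub{h \sim Q}{\phi(h)} \leq \kl(Q,P) + \myln{\expectsub{h \sim P}{\mye{\phi(h)}}}.$$
The idea is to choose $\phi(h)$ to be an affine combination of $L(h)$ and $\hat{L}(h)$, so that $\expectsub{h \sim Q}{\phi(h)}$ reproduces the left-hand side I am after, and then to control the \emph{prior} moment $\expectsub{h \sim P}{\mye{\phi(h)}}$ through the draw of the sample. Since $\lambda > \frac12$ is fixed in advance, the coefficients in $\phi$ are nonrandom, which is precisely what keeps the later Fubini and per-$h$ factorization valid.

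Concretely, I would set $c = N/(\lambda \lmax)$ and define
$$\phi(h) = c\parens{L(h) - \hat{L}(h)} - \frac{c^2 \lmax}{2N}\,L(h),$$
so that affinity gives $\expectsub{h \sim Q}{\phi(h)} = c\parens{L(Q) - \hat{L}(Q)} - \frac{c^2 \lmax}{2N}L(Q)$. The first main step is to show that $\expectsub{S \sim D^N}{\expectsub{h \sim P}{\mye{\phi(h)}}} \leq 1$. By Fubini this reduces to bounding, for each fixed $h$, the quantity $\expectsub{S \sim D^N}{\mye{\phi(h)}}$. Because $L(h)$ is nonrandom and $\hat{L}(h)$ is an average of $N$ i.i.d.\ copies of $L(h,s) \in [0,\lmax]$, the sample expectation factors into an $N$-fold product; I would bound each factor using convexity of $x \mapsto e^{-tx}$ on $[0,\lmax]$ together with $1-x \leq e^{-x}$, giving $\expectsub{s \sim D}{\mye{-\frac{c}{N}L(h,s)}} \leq \mye{-\frac{L(h)}{\lmax}\parens{1 - e^{-c\lmax/N}}}$. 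Collecting terms, the full exponent becomes $L(h)\brackets{c - \frac{c^2\lmax}{2N} - \frac{N}{\lmax}\parens{1 - e^{-c\lmax/N}}}$, and the elementary inequality $1 - e^{-x} \geq x - x^2/2$ (valid for $x \geq 0$) makes this exponent nonpositive for every $h$, yielding $\expectsub{S}{\mye{\phi(h)}} \leq 1$ and hence the claimed bound on the double expectation.

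Given $\expectsub{S}{\expectsub{h \sim P}{\mye{\phi(h)}}} \leq 1$, Markov's inequality shows that with probability at least $1-\delta$ over $S$ the prior moment $\expectsub{h \sim P}{\mye{\phi(h)}}$ is at most $1/\delta$, so $\myln{\expectsub{h \sim P}{\mye{\phi(h)}}} \leq \myln{1/\delta}$. Substituting this into the change-of-measure inequality yields, simultaneously for all $Q$,
$$c\parens{L(Q) - \hat{L}(Q)} - \frac{c^2\lmax}{2N}L(Q) \leq \kl(Q,P) + \myln{\tfrac{1}{\delta}}.$$
Dividing by $c$ and using $c = N/(\lambda\lmax)$ so that $\frac{c\lmax}{2N} = \frac{1}{2\lambda}$ and $\frac{1}{c} = \frac{\lambda\lmax}{N}$, then collecting the $L(Q)$ terms, gives $\parens{1 - \frac{1}{2\lambda}}L(Q) \leq \hat{L}(Q) + \frac{\lambda\lmax}{N}\parens{\kl(Q,P) + \myln{\frac{1}{\delta}}}$; dividing by the positive factor $1 - \frac{1}{2\lambda}$ produces (\ref{PAC-Bayes}).

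I expect the main obstacle to be the exponential-moment step of paragraph two: obtaining the one-sample bound and then matching the second-order correction coefficient $c^2\lmax/(2N)$ so that the entire $N$-fold product collapses below $1$. This is exactly where boundedness of the loss in $[0,\lmax]$ is used, and it plays the same structural role here that the relative Chernoff bound played in the Occam proof; the remaining steps (Markov, change of measure, and the algebraic rearrangement) are routine once that estimate is in hand.
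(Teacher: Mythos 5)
Your proposal is correct: every step checks out, including the key exponential-moment estimate (the exponent $L(h)\brackets{c - \frac{c^2\lmax}{2N} - \frac{N}{\lmax}\parens{1-e^{-c\lmax/N}}}$ is indeed nonpositive by $1-e^{-x}\geq x-\frac{x^2}{2}$), and the final algebra recovers (\ref{PAC-Bayes}) exactly, with $\lambda > \frac{1}{2}$ entering only to make $1-\frac{1}{2\lambda}$ positive. Your route shares the paper's skeleton --- per-hypothesis moment control, Fubini, Markov, and the shift-of-measure inequality --- but organizes the work differently in a way worth noting. The paper applies the shift of measure to $f(h)=N\kl_\gamma(\hat{L}(h),L(h))$ with $\kl_\gamma(q,p)=\gamma q - \ln\parens{1-p+pe^\gamma}$ and $\gamma=-\frac{1}{\lambda}$; since this is nonlinear in $L(h)$, the paper must invoke the joint convexity of $\kl_\gamma$ (Jensen) to pass from $\expectsub{h\sim Q}{\kl_\gamma(\hat{L}(h),L(h))}$ to $\kl_\gamma(\hat{L}(Q),L(Q))$, and then needs a separate inversion lemma (Lemma~\ref{lem:invert}, via $e^\gamma\geq 1+\gamma$ and $e^\gamma\leq 1+\gamma+\frac{1}{2}\gamma^2$) to solve for $L(Q)$. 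You instead linearize up front: your $\phi(h)$ is affine in $\parens{L(h),\hat{L}(h)}$, so the $Q$-expectation passes through exactly with no convexity argument, and the inversion reduces to trivial algebra. The price is paid inside the moment bound, where your $1-e^{-x}\geq x-\frac{x^2}{2}$ is precisely the paper's $e^\gamma\leq 1+\gamma+\frac{1}{2}\gamma^2$ in disguise; in effect $\phi(h)\leq N\kl_\gamma(\hat{L}(h),L(h))$, so your moment inequality is a weakening of the paper's (\ref{iidbasic}). What the paper's organization buys is the stronger intermediate statement (\ref{pre-PAC-Bayes}), which retains the full $\kl_\gamma$ information and whose convexity machinery is reused verbatim in the proof of Theorem~\ref{thm:expected}; what yours buys is a shorter, self-contained derivation with no auxiliary lemmas. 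Both proofs use boundedness of the loss in the same place (the chord bound on the exponential over $[0,\lmax]$), and both must fix the temperature parameter before the sample is drawn, which is why neither is uniform in $\lambda$.
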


\medskip
As with (\ref{Occam}), there is no point in taking $\lambda$ in (\ref{PAC-Bayes}) to be large ---
we can in practice assume that $\lambda$ is no larger than 10.  Second, although (\ref{PAC-Bayes}) is not uniform in $\lambda$
we can select $k$ different values $\lambda_1$, $\ldots$, $\lambda_k$ (all of which are selected before the draw of the sample) and by a simple union bound
over these values derive that with probability at least $1-\delta$ over the draw of the sample the following holds simultaneously for all $Q$.
\begin{equation}
\label{PAC-Bayes-Uniform}
L(Q) \leq \min_{1 \leq i \leq k} \;\;\frac{1}{1-\frac{1}{2\lambda_i}}\parens{\hat{L}(Q) + \frac{\lambda_i \lmax}{N}\parens{\kl(Q,P) + \ln \frac{k}{\delta}}}
\end{equation}
For minimizing the bound we can assume $1/2 \leq \lambda \leq 10$ and a small number of values of $\lambda$ should suffice.
While it is possible to give a version of this theorem that is uniform over all $\lambda > 1/2$, achieving this uniformity increases
the complexity of the proof.

\subsection{An Infinite Precision $L_2$ Bound}
\label{sec:L2}

As in section~\ref{sec:floating} we consider rules $h_\omega$ with $\omega \in \mathbb{R}^d$.  Here we also assume that
the rule is scale-invariant --- that the rule $h_\omega$ depends only on the direction of the vector $\omega$.
For example linear predictors of the form
$$h_\omega(x) = \argmax_y \omega^\intercal \Phi(x,y)$$
are scale-invariant.
For scale-invariant rules it is natural to consider the uniform distribution over the directions of $\omega$.
This uniform distribution can be formalized as an isotropic unit-variance prior $P = {\cal N}(0,1)^d$
where ${\cal N}(0,1)$ is the zero mean unit-variance Gaussian distribution.  For $\Theta \in \mathbf{R}^d$ we define the distribution
$Q_\Theta$ to be the isotropic unit-variance Gaussian centered on $\Theta$.  Since only the direction of $\omega$ matters,
we should think of $P$ as the uniform distribution over directions and think of $Q_\Theta$ as a non-uniform distribution over directions.
We then have the following.
\begin{eqnarray*}
L(Q_\Theta) & = & \expectsub{\epsilon \sim {\cal N}(0,1)^d}{L(f_{\Theta+\epsilon})} \\
\\
\hat{L}(Q_\Theta) & = & \expectsub{\epsilon \sim {\cal N}(0,1)^d}{\hat{L}(f_{\Theta+\epsilon})} \\
\\
\kl(Q_\Theta,P) & = & \frac{1}{2}||\Theta||^2
\end{eqnarray*}
The PAC-Bayesian bound (\ref{PAC-Bayes}) then gives that with probability at least $1-\delta$ over the draw of the
sample the following holds simultaneously for all $\Theta$.
\begin{equation}
\label{SVB}
L(Q_\Theta) \leq \frac{1}{1-\frac{1}{2\lambda}}\parens{\hat{L}(Q_\Theta) + \frac{\lambda \lmax}{N}\parens{\frac{1}{2}||\Theta||^2 + \ln \frac{1}{\delta}}}
\end{equation}

\subsection{Binary and Multi-Class classification}

As an example we can consider linear binary classification.  In this case we have that each situation is a pair $(x,y)$ with $y \in \{-1,1\}$
and we have
$$h_\omega(x) = \mathrm{sign}(\omega^\intercal \Phi(x))$$
where $\Phi$ is a feature map such that $\Phi(x) \in \mathbb{R}^d$.  We also use 0-1 loss
$$L(h,(x,y)) = 1_{h(x) \not = y}.$$
We then have
$$L(Q_\Theta,\;(x,y)) = P_{\omega \sim Q_\Theta}[h_\omega(x) \not = y] = P_{\epsilon \sim {\cal N}(0,1)}(\epsilon > y \Theta^\intercal \Phi(x)/||\Phi(x)||).$$
In this case we have that (\ref{SVB}) is very similar to the objective defining a support vector machine but where the hinge loss is replaced by a (non-convex)
sigmoidal loss function (the cumulative of a Gaussian).  In practice the rule $h_\Theta$ is used at test time noting that $\Theta$ is the mean of the distribution
$Q_\Theta$.

As another example we can consider expected loss for multi-class classification.
In this case each situation is a pair $(x,y)$ with
$x \in {\cal X}$ and $y \in {\cal Y}$ and where ${\cal Y}$ is small enough to be feasibly enumerated.
We assume a feature map $\Phi$ with $\Phi(x,y) \in \mathbb{R}^d$ and a loss function $\tilde{L}$ with
$\tilde{L}(\hat{y},y) \in [0,\lmax]$. For $\beta > 0$ we then have the following definitions.
\begin{eqnarray*}
h_\omega(x,y) & = & \frac{\omega^{\intercal}\Phi(x,y)}{||\omega||\;||\Phi(x,y)||} \\
\\
P_{\beta,\omega}(\hat{y}|x) & =  & \frac{1}{Z_{\beta,\omega,x}}\;e^{\beta h_\omega(x,\hat{y})} \\
\\
Z_{\beta,\omega,x} & = & \sum_{\hat{y}\in {\cal Y}} e^{\beta h_\omega(x,\hat{y})} \\
\\
L_{\beta}(h_\omega,\;(x,y)) & = & \expectsub{\hat{y} \sim P_{\beta,\omega}(\cdot|x)}{\tilde{L}(\hat{y},y)}
\end{eqnarray*}
This particular formulation has the property that $h_\omega$ is scale invariant (depends only on the direction of $\omega$)
and $\beta$ is a parameter of the loss function.
This formulation also has the property that $L_\beta(h_\omega,\;(x,y))$ is differentiable in $\omega$.
We can then optimize the right hand side of (\ref{SVB}) by stochastic gradient descent using
$$\nabla_\Theta\; \hat{L}(Q_\Theta) = \frac{1}{N}\sum_{i=1}^N\; \expectsub{\epsilon \sim {\cal N}(0,1)^d}{\nabla_\Theta\; L(h_{\Theta + \epsilon},s_i)}.$$

\subsection{Dropouts}

We now present a dropout bound inspired by the recent success of dropout training in deep neural networks \cite{deng2013new}.
This dropout bound is the only original contribution of this tutorial.

For a given dropout rate $\alpha \in [0,1]$ and vector $\Theta \in \mathbb{R}^d$ we can stochastically generate
a vector $w \in \mathbb{R}^d$ by selecting, for each coordinate $w_i$, the value 0 with probability $\alpha$ (dropping the coordinate $\omega_i$)
or with probability $1-\alpha$ setting $\omega_i = \Theta_i +\epsilon$ with $\epsilon \sim {\cal N}(0,1)$.
We let $Q_{\alpha,\Theta}$ denote the distribution on vectors defined by this generation process.  To apply the
PAC-Bayesian bound we will take $Q_{\alpha,0}$ as the prior distribution and $Q_{\alpha,\Theta}$ as the posterior distribution.
The PAC-Bayesian theorem then implies that for a dropout rate $\alpha$ selected before the draw of the sample
we have that with probability at least $1-\delta$ over the draw of the sample the following holds simultaneously for all $\Theta$.
$$L(Q_{\alpha,\Theta}) \leq \frac{1}{1-\frac{1}{2\lambda}}\parens{\hat{L}(Q_{\alpha,\Theta}) + \frac{\lambda \lmax}{N}\parens{\kl(Q_{\alpha,\Theta},Q_{\alpha,0}) + \ln \frac{1}{\delta}}}$$

To clarify formal notation we first consider the Boolean $d$-cube ${\cal B}$ which is the set of vector $s \in \mathbb{R}^d$
such that $s_i \in \{0,1\}$ for all $1 \leq i \leq d$.  We will call vectors $s \in{\cal B}$ ``sparsity patterns''.
We let $S_\alpha$ be the distribution on the $d$-cube ${\cal B}$ (the distribution on sparsity patterns) generated
by selecting each $s_i$ independently with the probability of $s_i = 0$ being $\alpha$.
For a sparsity pattern $s$ and for $\omega \in \mathbb{R}^d$ we will write $s \circ \omega$ for the Hadamard product defined by
$(s \circ \omega)_i = s_i\omega_i$.
We then have that a draw from $Q_{\alpha,\Theta}$ can be made by first drawing a sparsity pattern $s \sim S_\alpha$ and a noise vector
$\epsilon \sim {\cal N}(0,1)^d$ and then constructing $s \circ (\Theta + \epsilon)$.
More formally we have the following.
$$\expectsub{\omega \sim Q_{\alpha,\Theta}}{f(\omega)} = \expectsub{s \sim S_\alpha,\epsilon \sim {\cal N}(0,1)^d}{f(s\circ (\Theta + \epsilon))}$$
We then have 
\begin{eqnarray*}
\kl(Q_{\alpha,\Theta},\;Q_{\alpha,0}) & = & \expectsub{s \sim S_\alpha, \epsilon \sim {\cal N}(0,1)^d}
{\ln \frac{S_\alpha(s) e^{-\frac{1}{2}||s \circ \epsilon||^2}}{S_\alpha(s) e^{-\frac{1}{2}||s \circ (\Theta + \epsilon)||^2}}} \\
\\
& = & \expectsub{s \sim S_\alpha}{\frac{1}{2}||s \circ \Theta||^2} \\
\\
& = & \frac{1-\alpha}{2}||\Theta||^2
\end{eqnarray*}

The PAC-Bayesian bound then gives that, for a dropout rate $\alpha$ selected before the draw of the sample, with probability at least $1-\delta$ over the draw of the sample
the following holds simultaneously for all $\Theta$.
\begin{equation}
\label{SVB2}
L(Q_{\alpha,\Theta}) \leq \frac{1}{1-\frac{1}{2\lambda}}\parens{\hat{L}(Q_{\alpha,\Theta}) + \frac{\lambda \lmax}{N}\parens{\frac{1-\alpha}{2}||\Theta||^2 + \ln \frac{1}{\delta}}}
\end{equation}
Comparing (\ref{SVB2}) with (\ref{SVB}) we see that a dropout rate of $\alpha$ reduces the complexity cost by a factor of $1-\alpha$.
However, for $\alpha$ very small we expect $\hat{L}(Q_{\alpha,\Theta})$ to be large.
We can optimize the right hand side of this bound by stochastic gradient descent using the following.
$$\nabla_\Theta \; \hat{L}(Q_{\alpha,\Theta}) = \frac{1}{N} \sum_{i=1}^N \;\expectsub{s \sim S_\alpha,\;\epsilon\sim {\cal N}(0,1)^d}{\nabla_\Theta L(h_{s \circ (\Theta + \epsilon)},s_i)}$$

\subsection{The PAC-Bayesian Posterior}

It is important to note that (\ref{PAC-Bayes}) has a closed-form solution for the distribution $Q$ minimizing the bound.
$$Q^* = \argmin_Q\;\hat{L}(Q) + \frac{\lambda \lmax}{N} \kl(Q,P)$$
In the case where the rule space ${\cal H}$ is finite
we have the constraint that $\sum_{h \in {\cal H}}Q(h) = 1$ and a straightforward application of the KTT conditions yields the following.
\begin{eqnarray*}
Q^*(h) = Q_\lambda(h) & = & \frac{1}{Z_\lambda} P(h)e^{\frac{-N \hat{L}(h)}{\lambda\lmax}} \\
Z_\lambda & = & \expectsub{h \sim P}{e^{-\frac{N}{\lambda \lmax} \hat{L}(h)}}
\end{eqnarray*}
Here we can think of $Q_\lambda$ as ``the'' PAC-Bayesian posterior distribution for regularization parameter $\lambda$.  It is important to note
that the choice of $\lmax$ strongly influences the posterior distribution. In the case of (\ref{PAC-Bayes-Uniform}) we can optimize over $\lambda$ as follows.
$$Q^* = Q_{\lambda_{i^*}}\;\;\;\;\;\;\;\;i^* = \argmin_{1 \leq i \leq k} \; \hat{L}(Q_{\lambda_i}) + \frac{\lambda \lmax}{N}\kl(Q_{\lambda_i},P)$$
$$L(Q^*) \leq \min_{1 \leq i \leq k}\; \frac{1}{1-\frac{1}{2\lambda_i}}\parens{\hat{L}(Q_{\lambda_i}) + \frac{\lambda_i \lmax}{N}\parens{\kl(Q_{\lambda_i},P) + \ln \frac{k}{\delta}}}$$

\section{A Training-Variance Bound}
\label{sec:local}

We now consider a fixed learning algorithm ${\cal A}$ which takes as input a sample $S \sim D^N$
and returns a rule distribution $Q_{\cal A}(S)$.  For a given learning algorithm ${\cal A}$
we now consider the expected loss $\expectsub{S\sim D^N}{L(Q_{\cal A}(S))}$ and the expected posterior
$$\bar{Q}_{\cal A}(h) = \expectsub{S\sim D^N}{Q_{\cal A}(S)(h)}.$$
The training-variance bound is the following where
we will write $\expectsub{S}{f(S)}$ for $\expectsub{S \sim D^N}{f(S)}$.
\begin{theorem}
For any fixed learning algorithm ${\cal A}$ and for $\lambda > \frac{1}{2}$ we have
\begin{equation}
\label{expected2}
\mbox{\footnotesize $\expectsub{S}{L(Q_{\cal A}(S))} \leq \frac{1}{1-\frac{1}{2\lambda}}\;\parens{\expectsub{S}{\hat{L}(Q_{\cal A}(S))} + \frac{\lambda\lmax}{N}\expectsub{S }{\kl(Q_{\cal A}(S),\bar{Q}_{\cal A})}}$.}
\end{equation}
\end{theorem}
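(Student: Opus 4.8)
The plan is to re-run the argument that establishes the PAC-Bayesian bound (\ref{PAC-Bayes}) in appendix~\ref{sec:PAC-Bayes-Proof}, but with two changes: take the prior to be the expected posterior $\bar{Q}_{\cal A}$, and replace the final high-probability (Markov) step by a direct application of Jensen's inequality to the expectation over $S$. The point that makes this legitimate is that, although $\bar{Q}_{\cal A}$ is defined through the algorithm $\cal A$, it is an average over all possible samples and therefore does \emph{not} depend on the particular sample $S$ that is drawn. Hence $\bar{Q}_{\cal A}$ is a perfectly admissible prior in the sense required by Theorem~\ref{thm:PAC-Bayes}, and choosing it as the prior turns the divergence term into the variance-like quantity $\expectsub{S}{\kl(Q_{\cal A}(S),\bar{Q}_{\cal A})}$.

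First I would recall the per-hypothesis exponential moment inequality underlying (\ref{PAC-Bayes}): for any fixed $h$ and any $\lambda > \frac{1}{2}$,
\[
\expectsub{S}{\mye{\frac{N}{\lambda\lmax}\parens{L(h) - \hat{L}(h) - \frac{L(h)}{2\lambda}}}} \leq 1,
\]
which comes from writing $\hat{L}(h)$ as an average of $N$ independent $[0,\lmax]$-valued samples and applying the elementary inequality $u - 1 + e^{-u} \leq \frac{1}{2}u^2$ for $u \geq 0$. Integrating this over $h \sim \bar{Q}_{\cal A}$ and interchanging the two expectations (Fubini, valid because $\bar{Q}_{\cal A}$ is independent of $S$) gives
\[
\expectsub{S}{\expectsub{h \sim \bar{Q}_{\cal A}}{\mye{\frac{N}{\lambda\lmax}\parens{L(h) - \hat{L}(h) - \frac{L(h)}{2\lambda}}}}} \leq 1.
\]

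Next, for each fixed $S$ I would apply the change-of-measure inequality $\expectsub{h \sim Q}{f(h)} \leq \kl(Q,\bar{Q}_{\cal A}) + \myln{\expectsub{h \sim \bar{Q}_{\cal A}}{\mye{f(h)}}}$ with $Q = Q_{\cal A}(S)$ and $f(h) = \frac{N}{\lambda\lmax}\parens{L(h) - \hat{L}(h) - \frac{L(h)}{2\lambda}}$. Using $L(Q) = \expectsub{h \sim Q}{L(h)}$ and $\hat{L}(Q) = \expectsub{h \sim Q}{\hat{L}(h)}$, the left-hand side is $\frac{N}{\lambda\lmax}\parens{L(Q_{\cal A}(S)) - \hat{L}(Q_{\cal A}(S)) - \frac{L(Q_{\cal A}(S))}{2\lambda}}$. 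I would then take $\expectsub{S}{\cdot}$ of both sides. On the right, Jensen's inequality moves the expectation inside the logarithm and the integrated moment bound finishes the job:
\[
\expectsub{S}{\myln{\expectsub{h \sim \bar{Q}_{\cal A}}{\mye{f(h)}}}} \leq \myln{\expectsub{S}{\expectsub{h \sim \bar{Q}_{\cal A}}{\mye{f(h)}}}} \leq \ln 1 = 0.
\]
This is the one place where the argument departs from the proof of Theorem~\ref{thm:PAC-Bayes}: there Markov's inequality is used to obtain a high-probability statement and produces the $\ln\frac{1}{\delta}$ term, whereas here Jensen directly annihilates the log-moment term and yields a statement in expectation with no $\delta$.

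Collecting terms leaves
\[
\frac{N}{\lambda\lmax}\parens{\parens{1 - \frac{1}{2\lambda}}\expectsub{S}{L(Q_{\cal A}(S))} - \expectsub{S}{\hat{L}(Q_{\cal A}(S))}} \leq \expectsub{S}{\kl(Q_{\cal A}(S),\bar{Q}_{\cal A})},
\]
and multiplying by $\frac{\lambda\lmax}{N}$, moving the empirical-loss term to the right, and dividing by $1 - \frac{1}{2\lambda}$ gives (\ref{expected2}). I expect the main difficulty to be expository rather than technical: the substantive content is the observation that $\bar{Q}_{\cal A}$ is a valid (sample-independent) prior and that, by using an expectation bound in place of a high-probability bound, the complexity penalty becomes a genuine variance term; the remaining points --- the Fubini interchange, finiteness of the moments so that Jensen applies, and the closing arithmetic --- are routine.
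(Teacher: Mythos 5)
Your proof is correct, and its skeleton is the same as the paper's: the paper proves the more general Theorem~\ref{thm:expected} (valid for any sample-independent prior $P$) and then obtains (\ref{expected2}) as an immediate corollary by exactly the observation you make --- that $\bar{Q}_{\cal A}$, being an average over samples, does not depend on the drawn sample and is therefore an admissible prior; and the paper's proof of Theorem~\ref{thm:expected}, like yours, replaces the Markov step of the PAC-Bayesian proof with Jensen's inequality (concavity of $\ln$) plus a Fubini interchange against a per-hypothesis exponential moment bound. Where you genuinely differ is in the choice of moment function, and the difference is instructive. The paper carries the nonlinear quantity $\kl_\gamma(\hat{L}(h),L(h))$ all the way through, which forces it to invoke the joint convexity of $\kl_\gamma$ (to pass from $\expectsub{h \sim Q}{\kl_\gamma(\hat{L}(h),L(h))}$ to $\kl_\gamma(\hat{L}(Q),L(Q))$, and again when taking $\expectsub{S}{\cdot}$) and then to invert the resulting $\kl_\gamma$ inequality via Lemma~\ref{lem:invert}. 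You linearize up front: your exponent $\frac{N}{\lambda\lmax}\parens{L(h) - \hat{L}(h) - \frac{L(h)}{2\lambda}}$ is precisely the lower bound on $N\kl_{-1/\lambda}(\hat{L}(h),L(h))$ that the proof of Lemma~\ref{lem:invert} extracts via $e^\gamma \leq 1 + \gamma + \frac{1}{2}\gamma^2$, so your change-of-measure functional is \emph{linear} in $(L(h),\hat{L}(h))$, averaging over $h$ and over $S$ is exact rather than an appeal to convexity, and no inversion lemma is needed at the end --- the bound falls out by arithmetic. The paper's route retains a tighter intermediate statement (the $\kl_\gamma$ form, optimizable over $\gamma$) and reuses the machinery of Theorem~\ref{thm:PAC-Bayes} verbatim; yours is shorter and self-contained. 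One small point to tighten in a write-up: deriving your moment inequality requires, besides $e^{-u} \leq 1 - u + \frac{1}{2}u^2$ for $u \geq 0$, also $x^2 \leq \lmax x$ for $x \in [0,\lmax]$ (so that $\expectsub{}{x^2} \leq \lmax L(h)$) and $1+z \leq e^z$; with those two standard additions your claimed bound $\expectsub{S}{\mye{\frac{N}{\lambda\lmax}\parens{L(h)-\hat{L}(h)-\frac{L(h)}{2\lambda}}}} \leq 1$ is indeed valid.
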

We can think of $\expectsub{S}{\kl(Q_{\cal A}(S),\bar{Q}_{\cal A})}$ as a measure of the variation of $Q_{\cal A}(S)$ over the draw of $S$.
For the PAC-Bayesian bound (\ref{PAC-Bayes}) we have a closed form solution for the optimal posterior.  But for the training-variance bound (\ref{expected2})
we do not have a solution for the optimal algorithm.  The training-variance bound seems to motivate variance-reduction methods
such as bagging \cite{breiman1996bagging}.

The training-variance bound is an immediate corollary of the following more general theorem
which is implicit in Catoni~\cite{catoni2007pac} and which is proved in appendix~\ref{sec:expected}.
\begin{theorem}
\label{thm:expected}
For any rule distribution $P$, learning algorithm ${\cal A}$, and for $\lambda > \frac{1}{2}$, we have
\begin{equation}
\label{expected}
\mbox{\footnotesize $\expectsub{S}{L(Q_{\cal A}(S))} \leq \frac{1}{1-\frac{1}{2\lambda}}\;\parens{\expectsub{S}{\hat{L}(Q_{\cal A}(S))} + \frac{\lambda\lmax}{N}\expectsub{S }{\kl(Q_{\cal A}(S),P)}}$.}
\end{equation}
\end{theorem}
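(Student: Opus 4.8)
The plan is to reduce this to the same exponential-moment machinery that underlies the PAC-Bayesian bound~(\ref{PAC-Bayes}), but to replace its final Markov-inequality step with Jensen's inequality. That substitution is exactly what makes the $\ln\frac{1}{\delta}$ term disappear and converts a high-probability statement into a statement about expectations.

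First I would establish the central moment bound: for any single fixed rule $h$ (fixed before the sample),
$$\expectsub{S}{\mye{\frac{N}{\lambda\lmax}\parens{\parens{1-\frac{1}{2\lambda}}L(h) - \hat{L}(h)}}} \leq 1.$$
Because $S \sim D^N$ is IID and $\hat{L}(h) = \frac{1}{N}\sum_{s \in S} L(h,s)$, this exponential factors over the $N$ draws, so it suffices to treat a single draw $s \sim D$ and show
$$\expectsub{s \sim D}{\mye{\frac{1}{\lambda\lmax}\parens{\parens{1-\frac{1}{2\lambda}}L(h) - L(h,s)}}} \leq 1.$$
Writing $\beta = \frac{1}{\lambda\lmax}$ and $\ell = L(h,s) \in [0,\lmax]$, I would pull out the deterministic factor $\mye{\beta(1-\frac{1}{2\lambda})L(h)}$ and bound $\expectsub{s}{\mye{-\beta\ell}}$ using the elementary inequality $\mye{-x} \leq 1 - x + \frac{x^2}{2}$ for $x \geq 0$, together with $\ell^2 \leq \lmax\,\ell$ (valid since $\ell \in [0,\lmax]$) and finally $1 - x \leq \mye{-x}$. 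The identity $\frac{\beta\lmax}{2} = \frac{1}{2\lambda}$ is precisely what produces the factor $\parens{1 - \frac{1}{2\lambda}}$ and causes the two deterministic exponentials to cancel. This single-draw estimate is the only place where the exact constants matter, and it is the step I expect to be the main obstacle.

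Next I would invoke the change-of-measure (compression) lemma, which states that for any fixed distributions $Q$ and $P$ and any function $\phi$,
$$\expectsub{h \sim Q}{\phi(h)} \leq \kl(Q,P) + \myln{\expectsub{h \sim P}{\mye{\phi(h)}}}.$$
Applying this for each fixed sample $S$ with $Q = Q_{\cal A}(S)$ and $\phi(h) = \frac{N}{\lambda\lmax}\parens{\parens{1-\frac{1}{2\lambda}}L(h) - \hat{L}(h)}$, and then taking $\expectsub{S}{\cdot}$ of both sides, the log term becomes $\expectsub{S}{\myln{\expectsub{h \sim P}{\mye{\phi(h)}}}}$. By Jensen's inequality (concavity of $\ln$) this is at most $\myln{\expectsub{S}{\expectsub{h \sim P}{\mye{\phi(h)}}}}$, and since $P$ is fixed before the sample, Fubini lets me swap the two expectations and apply the moment bound to each fixed $h$ in the support of $P$, so the inner expectation is at most $1$ and the whole term is at most $0$.

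Finally I would unfold the left-hand side using linearity: since $L(Q) = \expectsub{h\sim Q}{L(h)}$ and $\hat{L}(Q) = \expectsub{h\sim Q}{\hat{L}(h)}$, the surviving inequality reads
$$\frac{N}{\lambda\lmax}\parens{\parens{1-\frac{1}{2\lambda}}\expectsub{S}{L(Q_{\cal A}(S))} - \expectsub{S}{\hat{L}(Q_{\cal A}(S))}} \leq \expectsub{S}{\kl(Q_{\cal A}(S),P)}.$$
Multiplying through by $\frac{\lambda\lmax}{N}$, rearranging, and dividing by the positive factor $1 - \frac{1}{2\lambda}$ (using $\lambda > \frac12$) yields~(\ref{expected}). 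Everything after the single-draw moment estimate is the standard change-of-measure-plus-Jensen template, with the crucial structural point being only that $P$ does not depend on $S$, so that Fubini and the per-$h$ moment bound are legitimate.
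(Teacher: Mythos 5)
Your proposal is correct: the single-draw estimate you flagged as the main obstacle does go through exactly as you describe (with $\beta = \frac{1}{\lambda\lmax}$, the chain $\expectsub{s}{\mye{-\beta\ell}} \leq 1 - \beta\mu + \frac{\beta^2\lmax\mu}{2} = 1-\beta\mu\parens{1-\frac{1}{2\lambda}} \leq \mye{-\beta\mu\parens{1-\frac{1}{2\lambda}}}$ cancels the deterministic factor), and the rest of your argument is sound. Your route shares the paper's overall skeleton --- a per-rule exponential moment bound, the change-of-measure lemma, then Jensen for $\ln$ plus Fubini to kill the log term --- but the technical instantiation is genuinely different. The paper reuses the machinery from its proof of Theorem~\ref{thm:PAC-Bayes}: its moment bound (\ref{iidbasic}) is stated for the nonlinear quantity $\kl_\gamma(\hat{L}(h),L(h))$, so it must invoke joint convexity of $\kl_\gamma$ twice (to pass the expectation over $h \sim Q_{\cal A}(S)$ and then the expectation over $S$ inside $\kl_\gamma$), and must finish by inverting the resulting inequality via Lemma~\ref{lem:invert} at $\gamma = -\frac{1}{\lambda}$. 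You instead prove a Bernstein-style moment bound for a \emph{linear} exponent with the constants pre-arranged, $\expectsub{S}{\mye{\frac{N}{\lambda\lmax}\parens{\parens{1-\frac{1}{2\lambda}}L(h)-\hat{L}(h)}}} \leq 1$; because the exponent is linear in $\parens{L(h),\hat{L}(h)}$, both expectations pass through exactly, no convexity argument is needed, and the conclusion follows by dividing by $1-\frac{1}{2\lambda} > 0$ rather than by an inversion lemma. The two are closely related: your exponent is a pointwise lower bound on $N\kl_\gamma(\hat{L}(h),L(h))$ at $\gamma = -\frac{1}{\lambda}$, obtained from the same quadratic estimates ($\mye{-x} \leq 1-x+\frac{x^2}{2}$, $1-x \leq \mye{-x}$) that the paper defers to Lemma~\ref{lem:invert}; in effect you compose the inversion into the moment bound up front. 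What the paper's route buys is the stronger intermediate inequality (\ref{pre-PAC-Bayes}) in terms of $\kl_\gamma$, which is shared with Theorem~\ref{thm:PAC-Bayes} and can be inverted in other, potentially tighter ways; what your route buys is a shorter, self-contained proof with identical final constants.
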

It was observed by Langford \cite{langford1999microchoice}
that the rule distribution $P$ minimizing $\expectsub{S}{\kl(Q_{\cal A}(S),P)}$ is $\bar{Q}_{\cal A}$.
This can be shown as follows.
\begin{eqnarray*}
\expectsub{S }{\kl(Q_{\cal A}(S),P)} & = & \expectsub{S,\;h \sim Q_{\cal A}(S)}{\ln \frac{Q_{\cal A}(S)(h)}{P(h)}}  \\
\\
 & = & \expectsub{S,\;h \sim Q_{\cal A}(S)}{\ln \frac{Q_{\cal A}(S)(h)}{\bar{Q}_{\cal A}(h)}} + \expectsub{h \sim \bar{Q}_{\cal A}}{\ln \frac{\bar{Q}(h)}{P(h)}} \\
\\
& = & \expectsub{S}{\kl(Q_{\cal A}(S),\bar{Q}_{\cal A})} + \kl(\bar{Q}_{\cal A},P)
\end{eqnarray*}
This shows that (\ref{expected2}) dominates (\ref{expected}) and is much better when
$\kl(\bar{Q}_{\cal A},P)$ is large.

For a given learning algorithm ${\cal A}$ we can insert
$\bar{Q}_{\cal A}$ for $P$ in the PAC-Bayesian bound (\ref{PAC-Bayes}) yielding the following high confidence version of (\ref{expected2}).
\begin{theorem}
For any given learning algorithm ${\cal A}$ and $\lambda > \frac{1}{2}$ we have the following with probability at least $1-\delta$ over the draw of the sample.
\begin{equation}
\label{local}
\mbox{\small $L(Q_{\cal A}(S)) \leq \frac{1}{1-\frac{1}{2\lambda}}\parens{\hat{L}(Q_{\cal A}(S)) + \frac{\lambda\lmax}{N}\parens{\kl(Q_{\cal A}(S),\bar{Q}_{\cal A}) + \ln \frac{1}{\delta}}}$}
\end{equation}
\end{theorem}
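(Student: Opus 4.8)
The plan is to apply the PAC-Bayesian bound (\ref{PAC-Bayes}) with the prior $P$ instantiated as the expected posterior $\bar{Q}_{\cal A}$. The only thing that genuinely requires checking is that this instantiation is legitimate, i.e.\ that $\bar{Q}_{\cal A}$ qualifies as a prior in the sense demanded by Theorem~\ref{thm:PAC-Bayes}, which insists that $P$ be fixed before the draw of the sample.

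First I would verify that $\bar{Q}_{\cal A}$ is a fixed probability distribution on ${\cal H}$ that does not depend on the realized sample. By definition $\bar{Q}_{\cal A}(h) = \expectsub{S}{Q_{\cal A}(S)(h)}$ is an expectation over the entire distribution $D^N$, so the realized sample $S$ has been integrated out; $\bar{Q}_{\cal A}$ depends only on ${\cal A}$ and $D$, both of which are fixed before the draw. That $\bar{Q}_{\cal A}$ is a genuine distribution follows from linearity of expectation: each $Q_{\cal A}(S)$ integrates to $1$, so the mixture $\bar{Q}_{\cal A}$ does as well.

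Next I would invoke Theorem~\ref{thm:PAC-Bayes} with $P \leftarrow \bar{Q}_{\cal A}$ and the given $\lambda > \frac{1}{2}$. Since $\bar{Q}_{\cal A}$ is fixed before the draw, the theorem applies verbatim and yields, with probability at least $1-\delta$ over the draw of $S$, that
\begin{equation*}
L(Q) \leq \frac{1}{1-\frac{1}{2\lambda}}\parens{\hat{L}(Q) + \frac{\lambda\lmax}{N}\parens{\kl(Q,\bar{Q}_{\cal A}) + \ln\frac{1}{\delta}}}
\end{equation*}
holds simultaneously for \emph{all} distributions $Q$ on ${\cal H}$.

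Finally, because this bound is uniform over all posteriors $Q$, I would instantiate it at the particular data-dependent choice $Q = Q_{\cal A}(S)$ produced by the algorithm on the realized sample, which delivers exactly (\ref{local}). Uniformity is precisely what licenses this \emph{a posteriori} substitution: although $Q_{\cal A}(S)$ depends on $S$, the high-probability event on which the inequality holds was established uniformly, before fixing any $Q$. The one substantive point --- essentially the whole content of the argument, and the potential pitfall one must guard against --- is the observation (due to Langford) that $\bar{Q}_{\cal A}$, despite being built from the sample-dependent outputs $Q_{\cal A}(S)$, is itself sample-independent and hence available as a legal prior; everything else is a direct appeal to the uniformity already contained in Theorem~\ref{thm:PAC-Bayes}.
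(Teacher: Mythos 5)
Your proposal is correct and takes exactly the paper's route: the paper proves (\ref{local}) by the one-line observation that $\bar{Q}_{\cal A}$ may be inserted for the prior $P$ in the PAC-Bayesian bound (\ref{PAC-Bayes}), which is legitimate precisely because $\bar{Q}_{\cal A}$ averages over the sample distribution $D^N$ rather than depending on the realized sample. Your write-up merely makes explicit the two points the paper leaves implicit --- that $\bar{Q}_{\cal A}$ is a fixed, sample-independent distribution, and that the uniformity of (\ref{PAC-Bayes}) over all $Q$ licenses the data-dependent choice $Q = Q_{\cal A}(S)$.
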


\section{Applying the Training-Variance Bound to the PAC-Bayesian Posterior}

We now consider the learning algorithm that maps a sample $S$ to the PAC-Bayesian posterior $Q_\lambda(S)$.
Here $\lambda$ is a parameter of the learning algorithm.
We should note that, although $Q_\lambda$ is the posterior optimizing the PAC-Bayesian bound~(\ref{PAC-Bayes}), it seems unlikely
that $Q_\lambda$ is the algorithm optimizing the training-variance bound (\ref{expected2}).  Also, as we will see below, the analysis given here is somewhat
loose.

Following Catoni \cite{catoni2007pac} and Lever et al. \cite{lever2010distribution} we
approximate $\bar{Q}_\lambda$ with the following.
\begin{eqnarray*}
\ddot{Q}_{\lambda}(h) & = & \frac{1}{\ddot{Z}_\lambda}P(h)e^{\frac{-N L(h)}{\lambda\lmax}} \\
\\
\ddot{Z}_\lambda & = &\expectsub{h \sim P}{e^{\frac{-N L(h)}{\lambda\lmax}}}
\end{eqnarray*}
Inserting $\ddot{Q}_\lambda$ for $P$ in (\ref{expected}) gives
that for $\gamma > \frac{1}{2}$ we have the following.
\begin{equation}
\label{intermediate}
\mbox{\footnotesize $\expectsub{S}{L(Q_\lambda(S))} \leq \frac{1}{1-\frac{1}{2\gamma}}\parens{\expectsub{S}{\hat{L}(Q_\lambda(S))} + \frac{\gamma\lmax}{N}\expectsub{S}{\kl(Q_\lambda(S),\ddot{Q}_\lambda)}}$}
\end{equation}
Note that we allow $\gamma$ to be different from $\lambda$.  We now have the following bound from Catoni \cite{catoni2007pac} and whose proof is given in appendix~\ref{sec:applying}.
\begin{equation}
\label{BoundOnKL}
\expectsub{S}{\kl(Q_\lambda(S),\ddot{Q}_\lambda)} \leq \frac{N}{\lambda\lmax}\parens{\expectsub{S}{L(Q_\lambda(S))} - \expectsub{S}{\hat{L}(Q_\lambda(S))}}
\end{equation}
By inserting (\ref{BoundOnKL}) into (\ref{intermediate}), setting $\gamma = \frac{1}{2}\lambda$ and solving for $\expectsub{S}{L(Q_\lambda(S))}$ one can derive
the following for $\lambda > 2$.
\begin{equation}
\label{localposterior}
\mbox{\footnotesize $\expectsub{S}{L(Q_\lambda(S))} \leq \frac{1}{1-\frac{2}{\lambda}}\expectsub{S}{\hat{L}(Q_\lambda(S))}$}
\end{equation}
Note that $\frac{1}{2\lambda}$ in the Occam and PAC-Bayesian bound has been replaced with $\frac{2}{\lambda}$.  Also note that $\lmax$ appears in the definition of $Q_\lambda(S)$.

To get a corresponding high-confidence bound we first note that by inserting $\ddot{Q}_\lambda$ for $P$ in the PAC-Bayesian bound (\ref{PAC-Bayes}) we get that, for $\gamma > \frac{1}{2}$,
with probability at least $1-\delta$ over the draw of the sample we have
\begin{equation}
\label{local2}
\mbox{\small $L(Q_\lambda(S)) \leq \frac{1}{1-\frac{1}{2\gamma}}\parens{\hat{L}(Q_\lambda(S)) + \frac{\gamma\lmax}{N}\parens{\kl(Q_\lambda(S),\ddot{Q}_\lambda) + \ln \frac{1}{\delta}}}$.}
\end{equation}
This can be combined with the following whose proof is given in appendix~\ref{sec:applying}.
\begin{lemma}
For $\lambda > 0$, with probability at least $1-\delta$ over the draw of the sample we have
\begin{equation}
\label{BoundOnKL2}
\kl(Q_\lambda(S),\ddot{Q}_\lambda) \leq \frac{N}{\lambda \lmax}\parens{L(Q_\lambda(S)) - \hat{L}(Q_\lambda(S))} + \frac{N}{\lambda}\sqrt{\frac{\ln\frac{1}{\delta}}{2N}}.
\end{equation}
\end{lemma}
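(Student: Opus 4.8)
The plan is to compute the divergence $\kl(Q_\lambda(S),\ddot{Q}_\lambda)$ exactly and then control the single data-dependent term that is not already on the right-hand side of the claimed bound. Writing out the two densities, the ratio $Q_\lambda(S)(h)/\ddot{Q}_\lambda(h)$ simplifies to $(\ddot{Z}_\lambda/Z_\lambda)\,\exp\parens{\frac{N}{\lambda\lmax}(L(h)-\hat{L}(h))}$, since the common factor $P(h)$ cancels. Taking the expectation over $h \sim Q_\lambda(S)$ of the logarithm of this ratio then gives the identity
$$\kl(Q_\lambda(S),\ddot{Q}_\lambda) = \ln\frac{\ddot{Z}_\lambda}{Z_\lambda} + \frac{N}{\lambda\lmax}\parens{L(Q_\lambda(S)) - \hat{L}(Q_\lambda(S))}.$$
This already matches the target up to the term $\ln(\ddot{Z}_\lambda/Z_\lambda)$, so it will suffice to show that with probability at least $1-\delta$ we have $\ln(\ddot{Z}_\lambda/Z_\lambda) \leq \frac{N}{\lambda}\sqrt{\ln(1/\delta)/(2N)}$.

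First I would rewrite the partition-function ratio as an expectation under the \emph{data-independent} distribution $\ddot{Q}_\lambda$. Solving the definition of $\ddot{Q}_\lambda$ for $P(h)$ and substituting into $Z_\lambda$ yields
$$\frac{Z_\lambda}{\ddot{Z}_\lambda} = \expectsub{h \sim \ddot{Q}_\lambda}{e^{\frac{N}{\lambda\lmax}(L(h)-\hat{L}(h))}}.$$
Because $e^x$ is convex, Jensen's inequality gives $\ln(Z_\lambda/\ddot{Z}_\lambda) \geq \frac{N}{\lambda\lmax}\parens{L(\ddot{Q}_\lambda) - \hat{L}(\ddot{Q}_\lambda)}$, equivalently $\ln(\ddot{Z}_\lambda/Z_\lambda) \leq \frac{N}{\lambda\lmax}\parens{\hat{L}(\ddot{Q}_\lambda) - L(\ddot{Q}_\lambda)}$.

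The crucial point is that $\ddot{Q}_\lambda$ is fixed before the sample is drawn, so $\hat{L}(\ddot{Q}_\lambda)$ is a single average of $N$ IID random variables taking values in $[0,\lmax]$ with mean $L(\ddot{Q}_\lambda)$. A one-sided Hoeffding bound then gives that with probability at least $1-\delta$ we have $\hat{L}(\ddot{Q}_\lambda) - L(\ddot{Q}_\lambda) \leq \lmax\sqrt{\ln(1/\delta)/(2N)}$. Plugging this into the previous display bounds $\ln(\ddot{Z}_\lambda/Z_\lambda)$ by $\frac{N}{\lambda}\sqrt{\ln(1/\delta)/(2N)}$, and combining with the exact identity for the divergence yields the lemma.

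The main obstacle---really the one idea that makes everything routine afterward---is the change of measure turning $Z_\lambda/\ddot{Z}_\lambda$ into an expectation over the data-independent posterior $\ddot{Q}_\lambda$; the rest is a single Jensen step followed by one application of Hoeffding. This mirrors the proof of the expected bound (\ref{BoundOnKL}) exactly, except that there the same Jensen step is closed off by the identity $\expectsub{S}{\hat{L}(\ddot{Q}_\lambda)} = L(\ddot{Q}_\lambda)$ (forcing the residual term to vanish), whereas here that equality is replaced by the concentration inequality, which is precisely what produces the extra $\frac{N}{\lambda}\sqrt{\ln(1/\delta)/(2N)}$ term.
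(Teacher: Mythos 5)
Your proposal is correct and follows essentially the same route as the paper's own proof: the exact identity $\kl(Q_\lambda(S),\ddot{Q}_\lambda) = \frac{N}{\lambda\lmax}\parens{L(Q_\lambda(S)) - \hat{L}(Q_\lambda(S))} + \ln\ddot{Z}_\lambda - \ln Z_\lambda(S)$, followed by a change of measure from $P$ to $\ddot{Q}_\lambda$ inside the partition function, Jensen's inequality, and a one-sided Hoeffding bound on $\hat{L}(\ddot{Q}_\lambda) - L(\ddot{Q}_\lambda)$ (valid since $\ddot{Q}_\lambda$ is data-independent). The only difference is cosmetic bookkeeping --- you track the ratio $\ddot{Z}_\lambda/Z_\lambda$ directly where the paper lower-bounds $\ln Z_\lambda(S)$ --- and your closing comparison with the proof of (\ref{BoundOnKL}) is also exactly how the paper's two arguments relate.
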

Taking a union bound over (\ref{BoundOnKL2}) and (\ref{local2}) so that both are true simultaneously,
then inserting~(\ref{BoundOnKL2}) into (\ref{local2}), setting $\gamma = \frac{1}{2}\lambda$, and solving for $L(Q_\lambda(S))$,
yields that with probability at least $1-\delta$ over the draw of the sample we have
\begin{equation}
\label{secondloc}
L(Q_\lambda(S))  \leq  \frac{1}{1-\frac{2}{\lambda}}\parens{\hat{L}(Q_\lambda(S)) + \lmax\sqrt{\frac{\ln\frac{2}{\delta}}{2N}} + \frac{\lambda \lmax\ln\frac{2}{\delta}}{N}}
\end{equation}

Improvements in these bounds should be possible.
To see this consider the PAC-Bayesian bound (\ref{PAC-Bayes}) for which $Q_\lambda(S)$ is the optimal posterior.
$$L(Q_\lambda(S)) \leq \frac{1}{1 - \frac{1}{2\lambda}}\parens{\hat{L}(Q_\lambda(S)) + \frac{\lambda \lmax}{N}\parens{\kl(Q_\lambda(S),P) + \ln \frac{1}{\delta}}}$$
Replacing $P$ by $\bar{Q}_\lambda$ should significantly improve this bound.  However, replacing $P$ by $\ddot{Q}_\lambda$ and then
inserting (\ref{BoundOnKL2}) makes the bound vacuous.

\section{Incorporating Empirical Loss Variance}

For a given rule $h$ and sample $S = \{s_1,\ldots,s_n\}$ one can measure an empirical loss variance.
$$\hat{\sigma}^2(h) = \frac{1}{N-1} \; \sum_{i=1}^N (L(h,s_i) - \hat{L}(h))^2$$
It is natural to ask whether tighter bounds are possible if we allow the bounds to involve $\hat{\sigma}^2(h)$.
Audibert, Munos and Szepesvari \cite{audibert2006use} give a bound motivated by this question.
Consider a random variable $x \in [0,\lmax]$ with expectation $\mu$
and an IID sample $\{x_1,\ldots,x_n\}$ with empirical mean $\hat{\mu}$ and empirical variance $\hat{\sigma}^2$.
Audibert, Munos and Szepesvari prove that the following holds with probability at least $1-\delta$ over the draw of the sample.
$$\mu \leq \hat{\mu} + \sqrt{\frac{2\hat{\sigma}^2\ln\frac{3}{\delta}}{N}} + \frac{3\lmax\ln\frac{3}{\delta}}{N}$$
Taking a union bound over a prior $P$ we get that with probability at least $1-\delta$ the following holds
for all $h \in {\cal H}$.
$$L(h) \leq \hat{L}(h) + \sqrt{\frac{2\hat{\sigma}^2(h)\parens{\ln \frac{1}{P(h)} + \ln\frac{3}{\delta}}}{N}} + \frac{3\lmax\parens{\ln \frac{1}{P(h)} + \ln\frac{3}{\delta}}}{N}$$
To show the limitations of these bounds we consider the best possible case where $\hat{\sigma}^2(h) = 0$.  For this case we have the following theorem whose proof is given in appendix~\ref{sec:attempt1}.
\begin{theorem}
\label{thm:attempt1}
With probability at least $1-\delta$ over the draw of the sample we
have that the following holds for all $h$ such that $\hat{\sigma}^2(h) = 0$.
\begin{equation}
\label{OccamLower}
L(h) \leq \hat{L}(h) + \frac{\lmax\parens{\ln \frac{1}{P(h)} + \ln \frac{1}{\delta}}}{N-1}
\end{equation}
\end{theorem}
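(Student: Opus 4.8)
The plan is to establish, for each fixed rule $h$, a tail bound on the joint event that the empirical variance vanishes and the generalization loss exceeds the claimed threshold, and then to combine these across $h$ by a $P(h)$-weighted union bound exactly as in the proof of (\ref{Occam}). As there, I would first rescale so that $\lmax = 1$. Fix $h$, write $\mu = L(h)$, let $X_i = L(h,s_i) \in [0,1]$ be the i.i.d.\ per-example losses, let $q_c = P_{s \sim D}(L(h,s) = c)$ denote the atoms of the loss distribution, and abbreviate $\eta = \ln\frac{1}{P(h)} + \ln\frac{1}{\delta}$ and $\epsilon = \frac{\eta}{N-1}$. The key observation is that $\hat{\sigma}^2(h) = 0$ holds exactly when all $N$ sampled losses coincide with a common value $c = \hat{L}(h)$, an event of probability $q_c^N$. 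Hence the probability that both $\hat{\sigma}^2(h) = 0$ and $\mu > \hat{L}(h) + \epsilon$ equals $\sum_{c\,:\,c < \mu - \epsilon} q_c^N$, and it suffices to show this sum is at most $e^{-\eta} = P(h)\delta$; summing over $h$ then yields the theorem.

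The heart of the argument is a mean-based bound on the atoms. Since $X \in [0,1]$ with $E[X] = \mu$, any atom at value $c$ satisfies $\mu \le c\,q_c + (1 - q_c)$, and therefore $q_c \le \frac{1-\mu}{1-c}$. As a function of $c$ this is increasing, so over $c \le \mu - \epsilon$ it is maximized at $c = \mu - \epsilon$, whence every atom contributing to the bad event obeys $q_c \le \bar{q} := \frac{1-\mu}{1 - \mu + \epsilon} \le 1$. I would then peel off a single factor, writing $q_c^N \le \bar{q}^{\,N-1} q_c$, so that $\sum_{c < \mu - \epsilon} q_c^N \le \bar{q}^{\,N-1} \sum_c q_c \le \bar{q}^{\,N-1}$; this is precisely where the denominator $N-1$ (rather than $N$) enters.

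It remains to show $\bar{q}^{\,N-1} \le e^{-\eta}$. Here I would use the second crucial fact: on the bad event $\mu > \hat{L}(h) + \epsilon = c + \epsilon \ge \epsilon$, since $c \ge 0$. Thus $1 - \mu + \epsilon < 1$, and applying $\ln(1+x) \ge \frac{x}{1+x}$ with $x = \frac{\epsilon}{1-\mu}$ gives $(N-1)\ln\frac{1}{\bar{q}} = (N-1)\ln\!\big(1 + \tfrac{\epsilon}{1-\mu}\big) \ge (N-1)\frac{\epsilon}{1-\mu+\epsilon} > (N-1)\epsilon = \eta$, so that $\bar{q}^{\,N-1} \le e^{-\eta}$ as required.

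I expect the main obstacle to be isolating the two structural facts that make the clean $N-1$ rate emerge: the mean constraint $q_c \le \frac{1-\mu}{1-c}$ combined with the factor-peeling that trades one power of $\bar{q}$ for the total mass $\sum_c q_c \le 1$, and the observation that the bad event forces $\mu > \epsilon$, which is exactly what is needed for the $\ln(1+x) \ge \frac{x}{1+x}$ estimate to close the bound. Everything else is routine, and the degenerate situations (the loss having no atom below $\mu - \epsilon$, or $\mu = 1$) make the bad-event probability zero and can be dismissed immediately.
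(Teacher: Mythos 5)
Your proof is correct: the atom constraint $q_c \le \frac{1-\mu}{1-c}$ follows from $\mu \le c\,q_c + (1-q_c)$, the peeling $q_c^N \le \bar{q}^{\,N-1}q_c$ together with $\sum_c q_c \le 1$ is what produces the exponent $N-1$, and the final estimate closes because any contributing atom has $c \ge 0$, forcing $\mu > \epsilon$. (The one measure-theoretic point you should state explicitly is that for $N \ge 2$ the sampled losses can coincide only at an atom, almost surely, so the bad-event probability really is $\sum_{c < \mu-\epsilon} q_c^N$.) However, your route is the paper's argument in unconditional disguise rather than a genuinely independent one. The paper conditions on $s_1$: it calls $s_i$ an outlier for $h$ when $L(h,s_i) \neq L(h,s_1)$, bounds the probability that $\{s_2,\dots,s_N\}$ contains no outlier by $(1-\mu(h))^{N-1} \le e^{-(N-1)\mu(h)}$, where $\mu(h)$ is the outlier rate, and converts to a loss bound via $L(h) \le L(h,s_1) + \lmax\mu(h)$. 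Under the dictionary $c = L(h,s_1)$, $\mu(h) = 1 - q_c$, your steps map onto the paper's one-to-one: peeling a single factor of $q_c$ and summing over atoms is exactly integrating over the draw of $s_1$, which is where both proofs get $N-1$ rather than $N$; your mean constraint is a marginally sharper form of $L(h) \le L(h,s_1) + \lmax\mu(h)$ (you retain the factor $1-c$); and your $\ln(1+x) \ge \frac{x}{1+x}$ step does the work of the paper's $1-y \le e^{-y}$, since $\bar{q} = 1 - \frac{\epsilon}{1-\mu+\epsilon} \le 1 - \epsilon$ once $\mu > \epsilon$. What your packaging buys: a single direct bound on the joint bad event, with the link between atom mass and the mean made fully explicit, and no conditional probabilities. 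What the paper's buys: brevity, no need to discuss atoms at all (conditioning on $s_1$ handles arbitrary loss distributions automatically), and an explicit outlier rate $\mu(h)$ that directly supports the interpretive remark following the theorem --- zero empirical variance cannot rule out unseen outliers of loss $\lmax$ occurring at rate $\parens{\ln\frac{1}{P(h)} + \ln\frac{1}{\delta}}/(N-1)$.
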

The inequality (\ref{OccamLower}) is essentially the best that can be done using a union bound over $P(h)$.
The basic idea is that even if $\hat{\sigma}^2(h) = 0$ one cannot rule out the possibility of outliers which happened not to occur
in the data.  The probability of outliers cannot be bounded to be less than $(\ln \frac{1}{P(h)} + \ln \frac{1}{\delta})/N$ and an outlier can have loss $\lmax$
so (\ref{OccamLower}) is the best that can be done.

But (\ref{OccamLower}) is not significantly tighter than the general Occam bound (\ref{Occam}).
In particular, by taking $\lambda = 1$ in (\ref{Occam}) we get a bound that is only a factor of 2 worse than (\ref{OccamLower}).
If (\ref{OccamLower}) is dominated by $\hat{L}(h)$ then we can take $\lambda$ in the Occam bound (\ref{Occam}) to be large and the two bounds are essentially the same.

\section{Conclusion}

This paper focuses on three generalization bounds --- an Occam bound, a PAC-Bayesian bound, and a training-variance bound.  The Occam bound and PAC-Bayesian bound
seem to be important primarily because they provide the conceptual foundation required for the proof of the training-variance bound which dominates the other two.
While the PAC-Bayesian posterior defines a learning algorithm optimizing the PAC-Bayesian bound, there is no known analogous optimal algorithm for the  training-variance bound.
The bound seems to suggest variance reduction methods such as boosting.  There is clearly room for improved theoretical understanding of the consequences of the
training-variance bound.

\bibliographystyle{plain}
\bibliography{Catoni}

\appendix

\section{Proof of Theorem~\ref{thm:PAC-Bayes}}
\label{sec:PAC-Bayes-Proof}

All proofs in these appendices are adapted from Catoni \cite{catoni2007pac} except for the proof of theorem~\ref{thm:attempt1}
which is straightforward.

\medskip
The theorem states that for $\lambda > \frac{1}{2}$ selected before the draw of the sample (for any fixed $\lambda > 1/2$) we have that, with probability
at least $1-\delta$ over the draw of the sample, the following holds simultaneously for all distributions $Q$ on ${\cal H}$.
$$L(Q) \leq \frac{1}{1-\frac{1}{2\lambda}}\parens{\hat{L}(Q) + \frac{\lambda \lmax}{N}\parens{\kl(Q,P) + \ln \frac{1}{\delta}}}$$

We will consider the case of $\lmax = 1$, the general case follows by rescaling the loss function.
For real numbers $p,q \in [0,1]$ we define $\kl(q,p)$ to be the divergence from a
Bernoulli variable with bias $q$ to a Bernoulli variable with bias $p$.
$$\kl(q,p) = q\ln\frac{q}{p} + (1-q)\ln \frac{1-p}{1-q}$$
For a real number $\gamma$ we define
$$\kl_\gamma(q,p) = \gamma q - \ln\parens{1-p + p e^\gamma}.$$
By a straightforward optimization over $\gamma$ one can show
\begin{equation}
\label{kl}
\kl(q,p) = \sup_\gamma \kl_\gamma(q,p).
\end{equation}
Now consider a random variable $x$ with $x \in [0,1]$ and with mean $\mu$.  Let $\hat{\mu}$ be the mean of $N$ independent draws of $x$.
We first show that for any fixed $\gamma$ we have
\begin{equation}
\label{iidbasic}
\expectsub{}{e^{N\kl_\gamma(\hat{\mu},\mu)}} \leq 1.
\end{equation}
To see this note that $\expectsub{}{e^{N\gamma \hat{\mu}}} = \parens{\expectsub{}{e^{\gamma x}}}^N$.
For $x \in[0,1]$ we note that the convexity of the exponential function implies
$e^{\gamma x} \leq 1 - x + xe^{\gamma}$.  This gives $\expectsub{}{e^{N\gamma \hat{\mu}}} \leq (1-\mu + \mu e^{\gamma})^N$.
Dividing by the right hand side
gives $\expectsub{}{e^{N\parens{\gamma \hat{\mu} - \ln(1-\mu + \mu e^{\gamma})}}} \leq 1$ which is the same as (\ref{iidbasic}).
It is interesting to note that
\begin{eqnarray*}
\expectsub{S \sim D^N}{e^{N\kl(\hat{\mu},\mu)}} & = & \expectsub{S \sim D^N}{\sup_\gamma\;e^{N\kl_\gamma(\hat{\mu},\mu)}} \\
\\
& \geq & \sup_\gamma\; \expectsub{S \sim D^N}{e^{N\kl_\gamma(\hat{\mu},\mu)}}  \leq 1
\end{eqnarray*}
For $h$ fixed (\ref{iidbasic}) implies the following.
\begin{eqnarray}
\expectsub{S\sim D^N}{e^{N\kl_\gamma(\hat{L}(h),L(h))}} & \leq & 1 \nonumber \\
\expectsub{h \sim P}{\expectsub{S\sim D^N}{e^{N \kl_\gamma(\hat{L}(h),L(h))}}} & \leq & 1\nonumber \\
\label{departure2}
\expectsub{S\sim D^N}{\expectsub{h \sim P}{e^{N \kl_\gamma(\hat{L}(h),L(h))}}} & \leq & 1
\end{eqnarray}
Applying Markov's inequality to (\ref{departure2}) we get that with probability at least $1-\delta$ over the draw of $S$ we have
\begin{equation}
\label{Markov}
\expectsub{h \sim P}{e^{N\kl_\gamma(\hat{L}(h),L(h,P))}} \leq \frac{1}{\delta}.
\end{equation}
Next we observe the shift of measure lemma
\begin{equation}
\label{shift}
\expectsub{h \sim Q}{f(h)} \leq \kl(Q,P) + \ln \expectsub{h\sim P}{e^{f(h)}}
\end{equation}
which can be derived as follows.
\begin{eqnarray*}
\expectsub{h \sim Q}{f(h)} & = & \expectsub{h\sim Q}{\ln e^{f(h)}} \\
			& = & \expectsub{h\sim Q}{\ln \frac{P(h)}{Q(h)} e^{f(h)} + \ln\frac{Q(h)}{P(h)}} \\
			& \leq &  \ln \expectsub{h\sim Q}{\frac{P(h)}{Q(h)} e^{f(h)}} + \kl(Q,P)\\
			& = & \kl(Q,P) + \ln \expectsub{h\sim P}{e^{f(h)}}
\end{eqnarray*}
Setting $f(h) = N\kl_\gamma(\hat{L}(h),L(h))$ in (\ref{shift}) and using (\ref{Markov}) we get
$$\expectsub{h \sim Q}{N\kl_\gamma(\hat{L}(h),L(h))} \leq \kl(Q,P) + \ln\frac{1}{\delta}.$$
Noting that $\kl_\gamma(q,p)$ is jointly convex in $q$ and $p$ we get
\begin{equation}
\label{pre-PAC-Bayes}
\kl_\gamma(\hat{L}(Q),L(Q)) \leq \frac{1}{N}\parens{\kl(Q,P) + \ln\frac{1}{\delta}}.
\end{equation}
Theorem \ref{thm:PAC-Bayes} is now implied by the following lemma.

\medskip

\begin{lemma}
\label{lem:invert}
For $\lambda > \frac{1}{2}$, if $D_{-\frac{1}{\lambda}}(p,q) \leq c$ then $p \leq \frac{1}{1-\frac{1}{2\lambda}}\parens{q + \lambda c}$.
\end{lemma}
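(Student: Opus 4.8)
The plan is to invert the bound (\ref{pre-PAC-Bayes}) by pure elementary manipulation of the definition of $\kl_\gamma$. First I would unfold the hypothesis: with $\gamma = -\frac{1}{\lambda}$ the inequality $D_{-\frac{1}{\lambda}}(p,q) \leq c$ reads $-\frac{q}{\lambda} - \ln\parens{1 - p + pe^{-\frac{1}{\lambda}}} \leq c$, where $p$ plays the role of the true loss and $q$ the empirical loss (so that matching against (\ref{pre-PAC-Bayes}) will set $p = L(Q)$ and $q = \hat{L}(Q)$). Rearranging and exponentiating gives $1 - p\parens{1 - e^{-\frac{1}{\lambda}}} \geq e^{-c - \frac{q}{\lambda}}$, and since $1 - e^{-\frac{1}{\lambda}} > 0$ for $\lambda > 0$ I can solve for $p$ to obtain the exact bound $p \leq \frac{1 - e^{-c - q/\lambda}}{1 - e^{-1/\lambda}}$.

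Next I would replace the numerator and denominator by elementary estimates. For the numerator I use $1 - e^{-x} \leq x$, valid for all real $x$, with $x = c + \frac{q}{\lambda}$, giving numerator $\leq c + \frac{q}{\lambda}$. For the denominator I use the second-order estimate $e^{-x} \leq 1 - x + \frac{x^2}{2}$, valid for $x \geq 0$ (which follows by noting that $1 - x + \frac{x^2}{2} - e^{-x}$ vanishes together with its first derivative at $x=0$ and has nonnegative second derivative), with $x = \frac{1}{\lambda}$; this yields $1 - e^{-1/\lambda} \geq \frac{1}{\lambda} - \frac{1}{2\lambda^2} = \frac{1}{\lambda}\parens{1 - \frac{1}{2\lambda}}$. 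Since the numerator upper bound $c + \frac{q}{\lambda}$ is nonnegative and the denominator lower bound is positive, dividing preserves the inequality and gives $p \leq \frac{c + q/\lambda}{\frac{1}{\lambda}\parens{1 - \frac{1}{2\lambda}}} = \frac{1}{1 - \frac{1}{2\lambda}}\parens{q + \lambda c}$, which is the claim.

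The one point requiring care is precisely the hypothesis $\lambda > \frac{1}{2}$: it is needed to guarantee that the denominator lower bound $\frac{1}{\lambda}\parens{1 - \frac{1}{2\lambda}}$ is strictly positive, so that the final division is legitimate and the resulting prefactor $\frac{1}{1 - \frac{1}{2\lambda}}$ is positive and finite. I expect the main (and only modest) obstacle to be orienting the two estimates correctly --- an \emph{upper} bound on the numerator but a \emph{lower} bound on the denominator --- together with verifying the second-order exponential inequality, rather than any genuine difficulty. Applying this inversion to (\ref{pre-PAC-Bayes}) with $p = L(Q)$, $q = \hat{L}(Q)$, and $c = \frac{1}{N}\parens{\kl(Q,P) + \ln\frac{1}{\delta}}$ then yields Theorem~\ref{thm:PAC-Bayes} in the case $\lmax = 1$, with the general case following by rescaling.
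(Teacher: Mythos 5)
Your proof is correct and follows essentially the same route as the paper's: after exponentiating the hypothesis and solving for $p$, you bound the numerator via $1 - e^{-x} \leq x$ and the denominator from below via $e^{-x} \leq 1 - x + \frac{x^2}{2}$, which is exactly the paper's application of $e^\gamma \geq 1 + \gamma$ and $e^\gamma \leq 1 + \gamma + \frac{1}{2}\gamma^2$ with $\gamma = -\frac{1}{\lambda}$. Your explicit check that the numerator bound is nonnegative and the denominator bound positive (the latter being where $\lambda > \frac{1}{2}$ enters) is, if anything, slightly more careful than the paper's presentation of the same division step.
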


\begin{proof}
Let $\gamma$ abbreviate $-\frac{1}{\lambda}$.  We are given $q\gamma - \ln\parens{1-p + p e^\gamma} \leq c$.
Since $\lambda > \frac{1}{2}$ we have $\gamma \in (-2,0)$. We then get
$$p \leq \frac{1-e^{\gamma q - c}}{1 - e^\gamma}.$$
Applying $e^\gamma \geq 1+\gamma$ in the numerator and $e^\gamma \leq 1 + \gamma + \frac{1}{2}\gamma^2 \leq 1$ for $\gamma \in (-2,0)$ in the denominator
we get
$$p \leq \frac{-\gamma q + c}{-\gamma - \frac{1}{2}\gamma^2} = \frac{q - \frac{c}{\gamma}}{1 + \frac{1}{2}\gamma}$$
Replacing $\gamma$ by $-1/\lambda$ proves the lemma.
\end{proof}

\section{Proof of Theorem~\ref{thm:expected}}
\label{sec:expected}

The theorem states that for distribution $P$ on rules, any algorithm ${\cal A}$, and for $\lambda > \frac{1}{2}$, we have
$$\mbox{\footnotesize $\expectsub{S}{L(Q_{\cal A}(S))} \leq \frac{1}{1-\frac{1}{2\lambda}}\;\parens{\expectsub{S}{\hat{L}(Q_{\cal A}(S))} + \frac{\lambda\lmax}{N}\expectsub{S }{\kl(Q_{\cal A}(S),P)}}$.}$$

\begin{proof}
The proof is a slight modification of the proof of theorem~\ref{thm:PAC-Bayes} given in section~\ref{sec:PAC-Bayes-Proof}.
By the shift of measure lemma (\ref{shift}) we have the following for any fixed sample $S$.
$$\expectsub{h \sim Q_{\cal A}(S)}{N\kl_\gamma(\hat{L}(h),L(h))} \leq \kl(Q_{\cal A}(S),P) +  \ln \expectsub{h \sim P}{e^{N\kl_\gamma(\hat{L}(h),L(h))}}.$$
By the joint convexity of $\kl_\gamma$ we then have
$$\kl_\gamma(\hat{L}(Q_{\cal A}(S)),L(Q_{\cal A}(S))) \leq \frac{1}{N}\parens{\kl(Q_{\cal A}(S),P) +  \ln \expectsub{h \sim P}{e^{N\kl_\gamma(\hat{L}(h),L(h))}}}.$$
Taking the expectation of both sides with respect to $S$ and using the convexity of $\kl_\gamma$ and the concavity of $\ln$ we get
\begin{eqnarray*}
 & & \kl_\gamma\parens{\expectsub{S}{\hat{L}(Q_{\cal A}(S))},\expectsub{S}{L(Q_{\cal A}(S))}} \\
& \leq & \frac{1}{N}\parens{\expectsub{S}{\kl(Q_{\cal A}(S),P)} +  \ln \expectsub{h \sim P,S\sim D^N}{e^{N\kl_\gamma(\hat{L}(h),L(h))}}}.
\end{eqnarray*}
Theorem~\ref{thm:expected} now follows from (\ref{iidbasic}) and lemma~\ref{lem:invert}.
\end{proof}

\section{Proof of (\ref{BoundOnKL}) and (\ref{BoundOnKL2})}
\label{sec:applying}

(\ref{BoundOnKL}) is the following.
$$\expectsub{S}{\kl(Q_\lambda(S),\ddot{Q}_\lambda)} \leq \frac{N}{\lambda\lmax}\parens{\expectsub{S}{L(Q_\lambda(S))} - \expectsub{S}{\hat{L}(Q_\lambda(S))}}$$

\begin{proof}
\begin{eqnarray*}
\expectsub{S}{\kl(Q_\lambda(S),\ddot{Q}_\lambda)} & = & \expectsub{S,h \sim Q_\lambda(S)}{ \ln\frac{Q_\lambda(S)(h)}{\ddot{Q}_\lambda(h)}} \\
\\
 & = & \expectsub{S,h \sim Q_\lambda(S)}{\frac{N}{\lambda\lmax} L(h) - \frac{N}{\lambda\lmax} \hat{L}(h))}  \\
& & - \expectsub{S}{\ln Z_\lambda(S)} + \ln\ddot{Z}_\lambda  \\
 \\
 & = & \frac{N}{\lambda \lmax}\parens{\expectsub{S}{L(Q_\lambda(S))} - \expectsub{S}{\hat{L}(Q_\lambda(S))}} \\
& & - \expectsub{S}{\ln Z_\lambda(S)} + \ln\ddot{Z}_\lambda
\end{eqnarray*}
But the log partition function is convex in energy which gives
\begin{eqnarray*}
\expectsub{S}{\ln Z_\lambda(S)} & = & \expectsub{S}{\ln \expectsub{h \sim P}{e^{-\frac{N}{\lambda\lmax} \hat{L}(h)}}}  \\
& \geq & \ln \expectsub{h \sim P}{e^{-\frac{N}{\lambda\lmax}\expectsub{S}{\hat{L}(h)}}} \\
\\
& = & \ln \expectsub{h \sim P}{e^{-\frac{N}{\lambda\lmax}L(h)}} \\
\\ & = & \ddot{Z}_\lambda
\end{eqnarray*}
\end{proof}

\noindent (\ref{BoundOnKL2}) states that with probability at least $1-\delta$ we have

$$\kl(Q_\lambda(S),\ddot{Q}_\lambda) \leq \frac{N}{\lambda \lmax}\parens{\expectsub{S}{L(Q_\lambda(S)) - \hat{L}(Q_\lambda(S))}} + \frac{N}{\lambda}\sqrt{\frac{\ln\frac{1}{\delta}}{2N}}.$$

\begin{proof}
\begin{eqnarray*}
\kl(Q_\lambda(S),\ddot{Q}_\lambda) & = & \expectsub{h \sim Q_\lambda(S)}{ \ln\frac{Q_\lambda(S)(h)}{\ddot{Q}_\lambda(h)}}  \\
\\
 & = & \expectsub{h \sim Q_\lambda(S)}{\frac{N}{\lambda\lmax} L(h) - \frac{N}{\lambda\lmax} \hat{L}(h))}  \\
& & - \ln Z_\lambda(S) + \ln\ddot{Z}_\lambda  \\
 \\
 & = & \frac{N}{\lambda\lmax}\parens{L(Q_\lambda(S)) -\hat{L}(Q_\lambda(S)))}  \\
& & - \ln Z_\lambda(S) + \ln\ddot{Z}_\lambda
\end{eqnarray*}
\begin{eqnarray*}
\ln Z_\lambda(S) & = & \ln \expectsub{h \sim P}{e^{-\frac{N}{\lambda\lmax} \hat{L}(h)}}  \\
 & = & \ln \expectsub{h \sim \ddot{Q}_\lambda}{\frac{P(h)}{\ddot{Q}_\lambda(h)} e^{-\frac{N}{\lambda\lmax} \hat{L}(h)}} 
\end{eqnarray*}
\begin{eqnarray*}
  & \ge & \expectsub{h \sim \ddot{Q}_\lambda}{\ln \frac{P(h)}{\ddot{Q}_\lambda(h)}-\frac{N}{\lambda\lmax} \hat{L}(h)}  \\
 \\
 & = & \ln \ddot{Z}_\lambda + \frac{N(\; L(\ddot{Q}_\lambda) - \hat{L}(\ddot{Q}_\lambda)\;)}{\lambda\lmax}
\end{eqnarray*}
Since $\lambda$ is selected before the draw of the sample, a Hoeffding bound can be used to bound $L(\ddot{Q}_\lambda) - \hat{L}(\ddot{Q}_\lambda)$ yielding
that with probability at least $1-\delta$ over the draw of the sample we have
\begin{equation}
\label{partition2}
\ln Z_\lambda(S) \geq  \ln \ddot{Z}_\lambda - \frac{N}{\lambda}\sqrt{\frac{\ln \frac{1}{\delta}}{2N}}.
\end{equation}
\end{proof}

\section{Proof of Theorem~\ref{thm:attempt1}}
\label{sec:attempt1}

The theorem states that
with probability at least $1-\delta$ over the draw of the sample we
we have that the following holds for all $h$ such that $\hat{\sigma}^2(h) = 0$.
$$L(h) \leq \hat{L}(h) + \frac{\lmax\parens{\ln \frac{1}{P(h)} + \ln \frac{1}{\delta}}}{N-1}$$

\begin{proof}
Consider a sample $S =\{s_1,\ldots,s_n\}$.  We let the sample $s_1$ define a target loss value for each rule.  We consider a sample $s_i$ for $i>1$ to be an ``outlier''
for rule $h$ if $L(h,s_i) \not = L(h,s_1)$.  We can then use the standard ``realizable'' analysis over the sample $\{s_2,\ldots,s_N\}$ to bound the outlier rate.
More specifically, let $\mu(h)$ be the probability that a new draw of a situation from $D$ is an outlier for $h$.
$$\mu(h) = P_{s \sim D}(L(h,s) \not = L(h,s_1))$$
We will first show that with probability at least $1-\delta$ over the draw of
$\{s_2,\ldots,s_N\}$ we have that the following holds simultaneously for all $h$ such that $\hat{\sigma}^2(h) = 0$.
\begin{equation}
\label{internal}
\mu(h) \leq \frac{\ln \frac{1}{P(h)} + \ln \frac{1}{\delta}}{N-1}
\end{equation}
The probability over the draw of $\{s_2,\ldots,s_N\} \sim D^{N-1}$
that $\hat{\sigma}^2(h) = 0$  equals $(1-\mu(h))^{N-1} \leq e^{-(N-1)\mu(h)}$. So if $h$ violates (\ref{internal}) then
the probability that $\sigma^2(h) = 0$ is at most $P(h)\delta$.  By the union bound the probability that there exists an $h$
with $\hat{\sigma}^2(h) = 0$ and violating (\ref{internal}) is at most $\sum_h P(h)\delta = \delta$ and thus with high probability (\ref{internal}) holds
for all $h$.
The theorem then follows from the observation that if $\hat{\sigma}^2(h) = 0$ then $\hat{L}(h) = L(h,s_1)$ and
$L(h) \leq L(h,s_1) + \lmax \mu(h)$.
\end{proof}

\end{document}